\documentclass[11pt]{article}
\usepackage[preprint]{plain}
\usepackage{times}
\usepackage{latexsym}
\usepackage[T1]{fontenc}
\usepackage[utf8]{inputenc}
\usepackage{microtype}
\usepackage{inconsolata}
\usepackage{graphicx}
\usepackage{amsmath}
\usepackage{booktabs}
\usepackage{amsthm}
\newtheorem{proposition}{Proposition}

\title{Augmenting Bias Detection in LLMs Using Topological Data Analysis}

\author{Keshav Varadarajan \\ Department of Computer Science \\ Duke University \And
        Tananun Songdechakraiwut \\ Department of Computer Science \\ Duke University}

\begin{document}

\newcommand{\set}[1]{\{ #1 \}}
\newcommand{\series}[2]{\sum_{k = #1}^{#2}}

\maketitle
\begin{abstract}
Recently, many bias detection methods have been proposed to determine the level of bias a large language model captures. However, tests to identify which parts of a large language model are responsible for bias towards specific groups remain underdeveloped. In this study, we present a method using topological data analysis to identify which heads in GPT-2 contribute to the misrepresentation of identity groups present in the StereoSet dataset. We find that biases for particular categories, such as gender or profession, are concentrated in attention heads that act as hot spots. The metric we propose can also be used to determine which heads capture bias for a specific group within a bias category, and future work could extend this method to help de-bias large language models.
\end{abstract}

\section{Introduction}

As large language models have developed, they have become increasingly important for tasks such as machine translation, question answering, information retrieval, text summarization, word sense disambiguation, entity linking, semantic role labeling, and natural language inference. This has become especially true with the advent of pre-trained models like BERT \citep{devlin2019bert}, GPT \citep{radford2019language}, and BART \citep{lewis2019bart}, which are able to take advantage of large datasets like Wikipedia \citep{hovy2013collaboratively}.

However, the uncurated nature of the datasets used to train these models can result in biased representations of certain groups \citep{bender2021StochasticParrots}. Examples include the presence of gender bias in model outputs \citep{NEURIPS2021_1531beb7, kotek2023GenderBias} and the presence of stereotypes about Muslims \citep{abid2021AntiMuslim}.

Many studies have attempted to measure the bias in these models using the probability the model assigns to different sentences \citep{webster2020GenderCorrelations, ahn-oh-2021-mitigating, Kaneko_Bollegala_2022, nadeem-etal-2021-stereoset}, the word or sentence embeddings the model creates \citep{aylin2017Semantics, guo2021IntersectionalBiases, may-etal-2019-measuring, dev-etal-2021-oscar}, or the generated text of the model \citep{Chowdhery2022PaLMSL, chung2022scaling, gehman-etal-2020-realtoxicityprompts, Liang2022HolisticEO}. One issue with these methods is that they do not allow us to determine which parts of the model contribute to the bias it learns.

Recently, \citet{unlearningBiasGradients} proposed a contrastive gradient-based solution to determine which parameters of the model contribute to bias. However, this method applies to single tokens found in a sentence and does not capture the biases between sentences in the model.

One promising method to solve this problem is topological data analysis \citep{wasserman2018TDA, ghrist2008barcodes}. In the past, this approach has been used to study networks such as images and brains \citep{songdechakraiwut2020dynamic, hu2019topology}. Recently, it has been used to learn topological features in natural language data for tasks such as text classification \citep{Gholizadeh2020, Wen_2020}.

Our contributions are as follows:
\begin{itemize}
    \item We propose a new metric for the bias captured by each self-attention head of a large language model using their learned topological features.
    \item We find self-attention head hot spots in GPT-2 that capture more bias for given categories.
    \item We find that GPT-2 captures more bias in the gender and race categories.
\end{itemize}

The remainder of this paper is organized as follows: Section \ref{section:preliminaries} covers essential background, including epsilon filtrations and the Wasserstein distance. Section \ref{section:methodology} details our methods and introduces the Wasserstein bias statistic. Results are presented in Section \ref{section:results}, with directions for future work in Section \ref{section:conclusions}. Finally, Sections \ref{section:limitations} and \ref{section:ethics} discuss study limitations and ethical considerations.

\section{Preliminaries}
\label{section:preliminaries}

\subsection{Persistent homology of graphs}

Let $G = (V, E)$ be a complete graph consisting of a set of nodes $V$ and a symmetric weighted adjacency matrix $E$ with unique entries, assuming that each pair of nodes without an edge between them has an edge with infinitesimal non-zero weight. Let $G_\epsilon = (V, E_\epsilon)$ be a binary graph resulting from thresholding a graph $G$ at a threshold value $\epsilon$ such that:
\[ 
    E_\epsilon = E_{\epsilon, i, j} = \begin{cases} 
      1 & E_{i, j} > \epsilon \\
      0 & E_{i, j} \leq \epsilon.
   \end{cases}
\]
That is, a binary $G_\epsilon$ is an unweighted graph with the node set $V$ but edges of weights less than or equal to a threshold value $\epsilon$ removed. The binary graph is viewed as a simplicial complex consisting of only nodes and edges, known as a 1-skeleton \citep{edelsbrunner2022computational}.
Consider a graph filtration \citep{lee2012persistent} defined by a series of binary graphs 
\begin{equation}
    G_{\epsilon_0} \supset G_{\epsilon_1} \supset G_{\epsilon_2} \supset \ldots \supset G_{\epsilon_k},
    \label{eq:graphfiltration}
\end{equation}
where $\epsilon_0 < \epsilon_1 < \ldots < \epsilon_k$ are threshold values. Notice that as $\epsilon$ increases, more edges are thresholded from a graph $G$.

Persistent homology tracks the birth and death of topological features as threshold values $\epsilon$ vary. A topological feature that appears at a threshold value $b_i$ and persists until $d_i$ is represented as a point $(b_i, d_i)$ in a plane, forming a persistence diagram \citep{edelsbrunner2008persistent}. In the 1-skeleton, the only non-trivial topological features are connected components (0-dimensional) and cycles (1-dimensional). There are no higher-dimensional topological features in the 1-skeleton, unlike more general simplicial complexes \citep{ghrist2008barcodes}. This approach ensures computational scalability, which is crucial for performing statistical significance testing in topological analysis.

Consider the graph filtration defined in Eq. (\ref{eq:graphfiltration}), which begins with a complete graph $G_{-\infty}$ and proceeds by progressively removing edges, one at a time as the threshold parameter $\epsilon$ increases, ending with an edgeless graph $G_{\infty}$. As $\epsilon$ increases, the number of connected components and cycles change monotonically: connected components increase while cycles decrease \citep{songdechakraiwut2021topological, Songdechakraiwut2023-qy}. Once connected components appear, they persist until the final state $G_{+\infty}$, resulting in all connected components having death values at $+\infty$. This allows us to simplify the representation of connected components to a collection of birth values $I_0(G) = \{\epsilon_{b,i}\}$. Conversely, all cycles are present in the complete graph $G_{-\infty}$ and thus have birth values at $-\infty$. Therefore, cycles are represented by a collection of death values $I_1(G) = \{\epsilon_{d,i}\}$.

\subsection{Wasserstein distance metric}

The Wasserstein distance between the simplified persistence diagrams of the graph filtration defined in Eq. (\ref{eq:graphfiltration}) can be obtained using a closed-form solution. Given a graph $G$, its underlying probability density function on the persistence diagram is defined as Dirac masses \citep{turner2014frechet}:
\[
    f_{G, k}(x) = \frac{1}{|I_k(G)|} \sum_{\epsilon \in I_k(G)} \delta(x-\epsilon), 
\]
where $\delta(x-\epsilon)$ is a Dirac delta function centered at $\epsilon$, $|I_k(G)|$ is the cardinality of $I_k(G)$, and $k \in \{0, 1\}$ denotes the dimension of a topological feature, i.e., $k=0$ for connected components and $k=1$ for cycles. The corresponding empirical cumulative distribution function is defined as:
\[
    F_{G, k}(x) = \frac{1}{|I_k(G)|} \sum_{\epsilon \in I_k(G)} \mathbf{1}_{\epsilon \leq x}, 
\]
where $\mathbf{1}_{\epsilon \leq x}$ is the indicator function that is 1 if $\epsilon \leq x$ and 0 otherwise. That is, $F_{G, k}(z)$ is a step function with $|I_k(G)|$ steps.

The pseudoinverse of $F_{G, k}$, denoted as $F^{-1}_{G, k}(z)$, is defined to be the smallest $x$ for which $F_{G, k}(x) \geq z$.
The k-dimensional Wasserstein distance between two graphs $G_1$ and $G_2$ is defined to be
\[
    d_k(G_1, G_2) = \int_0^1 (F^{-1}_{G_1, k}(z) - F^{-1}_{G_2, k}(z))^2 dz.
\]
We define an approximation of the pseudoinverse distribution $F^{-1}_{G, k}(z)$ with a smoothing parameter $N \in \textbf{N}$ as a step-function $f^{-1}_{G, k, N}:[0,1] \rightarrow \textbf{R}$ with $N$ steps:
\[ 
    f^{-1}_{G, k, N}(z) = \begin{cases} 
      N\int_{0}^{\frac{1}{N}} F^{-1}_{G, k}(t) dt & 0 \leq z < \frac{1}{N} \\
      N\int_{\frac{1}{N}}^{\frac{2}{N}} F^{-1}_{G, k}(t) dt & \frac{1}{N} \leq z < \frac{2}{N} \\
      ... \\
      N\int_{\frac{N - 1}{N}}^{1} F^{-1}_{G, k}(t) dt & \frac{N - 1}{N} \leq z \leq 1.
   \end{cases}
\]
We can see an example of an approximated pseudoinverse distribution in Figure \ref{fig:pseudoinverseApprox}. We can think of each step of the approximated distribution as a weighted average of the corresponding interval in the actual distribution.
We use this approximation to calculate the approximated Wasserstein distance between two graphs as
\[
    d'_k(G_1, G_2) = \int_0^1 (f^{-1}_{G_1, k}(z) - f^{-1}_{G_2, k}(z))^2 dz.
\]

\begin{figure*}[t]
  \centering
  \includegraphics[width=.65\textwidth]{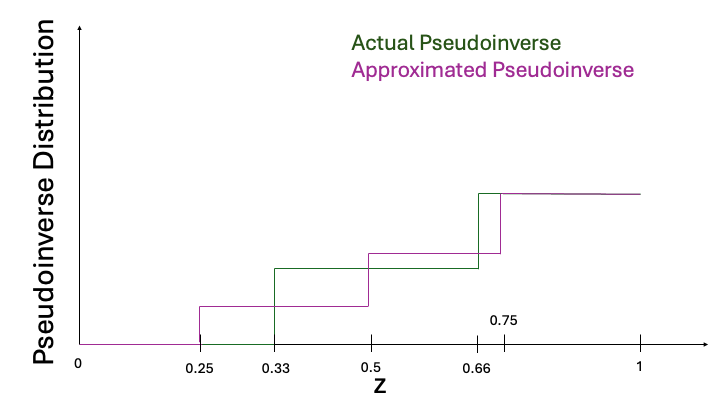}
  \caption{An example of an approximated pseudoinverse distribution with the smoothing parameter $N = 4$. Here the approximated pseudoinverse is purple and the actual is green.}
  \label{fig:pseudoinverseApprox}
\end{figure*}

\subsection{Cluster centers and variance}

Let $H$ be a set of complete graphs and let $N = \text{max}\{ |I_k(G) | G \in H \}$ be the smoothing parameter used for approximating each pseudo-inverse distribution.
We define the k-dimensional cluster center of $H$ to be the graph $\bar{G}$ constructed such that the pseudoinverse of $F_{\bar{G}, k}$ is the average of the pseudoinverse distributions of the cluster so
\begin{align*}
    F^{-1}_{\bar{G}, k, N}(z) & = \frac{1}{|H|} \sum_{G \in H} F^{-1}_{G, k, N}(z).
\end{align*}
We define the k-dimensional variance of the cluster to be the average Wasserstein distance from the cluster center to any graph in the cluster
\begin{align*}
    \Sigma_{H,k,N}^2 & = \frac{1}{|H|} \sum_{G \in H} d(G, \bar{G}).
\end{align*}
However, for the purposes of implementation we use the approximations of the pseudoinverse distributions so the approximated pseudoinverse of the cluster center becomes
\begin{align*}
    f^{-1}_{\bar{G}, k, N}(z) & = \frac{1}{|H|} \sum_{G \in H} f^{-1}_{G, k, N}(z)
\end{align*}
and the approximated cluster variance becomes 
\begin{align*}
    \sigma_{H,k,N}^2 & = \frac{1}{|H|} \sum_{G \in H} d'(G, \bar{G}).
\end{align*}

\section{Methodology}
\label{section:methodology}

We explain the methodology used to create our bias test in this section.

\subsection{Dataset and models}

In this study, we evaluate the bias captured by GPT-2 \citep{radford2019language}. We used the inter-sentence category of the StereoSet dataset \citep{nadeem-etal-2021-stereoset} for this bias test. It contains 2,123 discourse examples that begins with mentioning a group and continues with three options that either contain a positive stereotype (anti-stereotype), a negative stereotype, or an irrelevant sentence. Ideally, the model should equally learn the positive and negative stereotypes. StereoSet includes data for gender, profession, race, and religion related bias.

\subsection{Definitions}

Below, we provide some definitions used in the paper.

\begin{itemize}
    \item We define $D_{source}, D_S, D_A, D_I$ as the set of context sentences, stereotype continuations, anti-stereotype continuations, and irrelevant continuations respectively.
    \item We define $M_S^{l, h}, M_A^{l, h}, M_I^{l, h}$ as the set of attention matrices from the $l$-th layer and $h$-th head of the model generated by concatenating the context sentences with the stereotype continuations, anti-stereotype continuations, and irrelevant continuations respectively. An example of one such matrix can be found in Figure \ref{fig:attentionExample}.
    \item We define $C_S^{l, h}, C_A^{l, h}, C_I^{l, h}$ as the subsets of $M_S^{l, h}, M_A^{l, h}$ and $M_I^{l, h}$ generated using the same context sentences.
    \item We define $\Theta_S^{l, h, k}, \Theta_A^{l, h, k}, \Theta_I^{l, h, k}$ as the set of pseudo-inverse distributions of k-dimensional topological features for graphs calculated using the matrices from from $C_S^{l, h}, C_A^{l, h}$ and $C_I^{l, h}$ respectively.
    \item We define $\bar{f_S}^{l, h, k}, \bar{f_A}^{l, h, k}, \bar{f_I}^{l, h, k}$ as the k-dimensional pseudoinverse distributions of the approximated cluster centers of graphs generated from $C_S^{l, h}, C_A^{l, h}$ and $C_I^{l, h}$ respectively.
    \item We define $\sigma^2_{S, l, h, k}, \sigma^2_{A, l, h, k}, \sigma^2_{I, l, h, k}$ as the k-dimensional approximated cluster variances of graphs generated from $C_S^{l, h}, C_A^{l, h}$ and $C_I^{l, h}$ respectively.
\end{itemize}

\begin{figure*}[t]
  \centering
  \includegraphics[width=.7\textwidth]{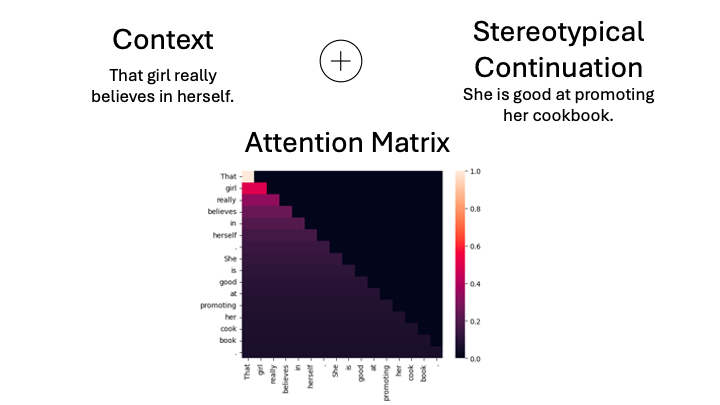}
  \caption{An example of an attention matrix generated for a stereotype sentence from the first layer and first head of GPT-2.}
  \label{fig:attentionExample}
\end{figure*}

\subsection{Wasserstein bias statistic}

We define the k-dimensional \emph{Wasserstein Bias Statistic} for a set of clusters $(C_S^{l, h}, C_A^{l, h}, C_I^{l, h})$ to be 
$$S_{l, h, k} = \frac{\sigma^2_{A, l, h, k} - \sigma^2_{S, l, h, k}}{\sigma^2_{I, l, h, k}},$$
where $\sigma^2_{A, l, h, k}$, $\sigma^2_{S, l, h, k}$, and $\sigma^2_{I, l, h, k}$ are the k-dimensional cluster variances of $C_S^{l, h}$, $C_A^{l, h}$, and $C_I^{l, h}$ respectively for layer $l$ and head $h$.
If $S_{l, h, k}$ is very positive, then the model has learned the stereotype sets of discourse better than the anti-stereotype sets of discourse, and vice versa. The variance of the irrelevant cluster, $\sigma^2_{I, l, h, k}$, serves as a benchmark for the largest variance as the model should have learned no significant topological features for the irrelevant cluster.

\subsection{Permutation test and Wasserstein bias metric}

To perform a permutation test on the Wasserstein Bias Metric, we find the expected value and variance of any given permutation of the values.

\begin{figure*}[t]
  \centering
  \includegraphics[width=.6\textwidth]{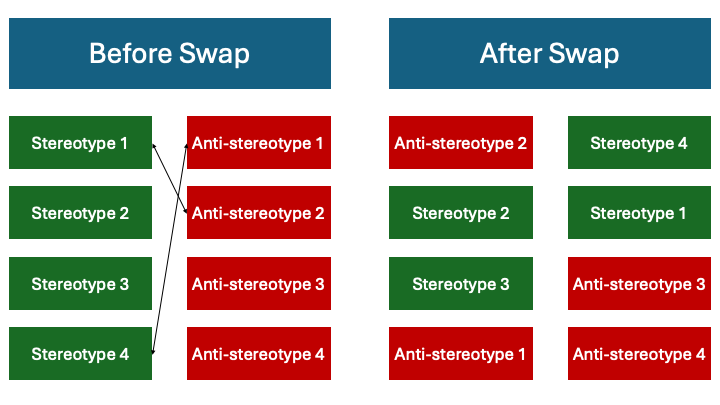}
  \caption{An example of a swap of length 2 in our permutation test.}
  \label{fig:swapFigure}
\end{figure*}

We define a random process where we randomly choose $1 \leq t \leq |C_S^{l, h}|$ stereotypes and anti-stereotype pseudo-inverse distributions to swap as shown in Figure \ref{fig:swapFigure}. We let
\begin{align*}
    W_t = c_1 + ... + c_t,
\end{align*}
where $c_i$ is the change to the statistic caused by the $i$-th swap. 
We can prove the following proposition, which shows us that the value of each $c_i$ is not dependent on the value of $S_{l, h, k}$ or the other swaps. We have shown the proof in Appendix \ref{appendix:statShiftProofs}.

\paragraph{Proposition \ref{appendix:propA5}.} For arbitrary clusters~$(B_S, B_A, B_I)$, if we swap $x \in B_S$ with $y \in B_A$ to get $(B'_S, B'_A, B'_I)$, the value of the statistic for the new clusters $(B'_S, B'_A, B'_I)$ is 
\begin{align*}
    S' & = S + \frac{1}{|B_S|r_I}[d(x, \bar{b}_A) + d(x, \bar{b}_S) \\
    & - d(y, \bar{b}_A) - d(y, \bar{b}_S)],
\end{align*}
where $r_I$ is the variance of the irrelevant cluster $B_I$, and $x$ is the approximated pseudo-inverse distribution from $B_S$ swapped with the approximated pseudo-inverse distribution $y$ from $B_A$.

It can be shown that each swap is equally likely to be in any given position, making them independent and identically distributed. Therefore, by the central limit theorem, we can say that the distribution $(W_t | t)$ is approximately normal.

We can find the distribution of $W_t$ given $t$ by using a swap matrix $A \in R^{n \times n}$ where $n = |C_S^{l, h}|$ is the cluster size and $A_{i, j} = S' - S$ where $S'$ is the new statistic after swapping the $i$-th stereotype with the $j$-th anti-stereotype. The proofs for these propositions are in Appendix \ref{appendix:statShiftDistribution}.

\paragraph{Proposition \ref{appendix:propB1}.} The probability of generating a swap of length $t$ is $P(t) = \frac{N_t}{\sum_{1 \leq k \leq n} N_k}$ where $N_i = \binom{|n|}{i}^2 i!$ for $1 \leq i \leq |n|$.

\paragraph{Proposition \ref{appendix:propB2}.} The expected value of a swap given the length is $E(W_t | t) = t\bar{A}$.

\paragraph{Proposition \ref{appendix:propB3}.} The variance of a swap given the length of the swap is 
\begin{align*}
    \text{Var}(W_t | t) & = E(W_t^2 | t) - E(W_t)^2 \\
                        & = t[\text{Var}(A) + \bar{A}^2] \\ 
                        & + (\frac{t^2 - t}{n^2 (n - 1)^2}) \\
                        & [(\sum_{s,t} A_{s, t})^2 \\ 
                        & - \sum_s (\sum_t A_{s, t})^2 - \sum_t (\sum_s A_{s, t})^2 \\
                        & + \sum_{s, t} A_{s, t}^2] \\ 
                        & - (t\bar{A})^2,
\end{align*}
where $\bar{A} = \frac{1}{n} \sum_{s, t} A_{s, t}$ and $\text{Var}(A) = \frac{1}{n^2} \sum_{s, t} (A_{s, t} - \bar{A})^2$.
Here we will refer to them as $\mu_t = E(W_t | t)$ and $\sigma_t^2 = \text{Var}(W_t | t)$.

We use this conditional distribution to determine the probability of the permuted clusters having a statistic greater than the observed value of our statistic
\begin{align*}
        P(W_t > 0) & = \sum_{t = 1}^{n} P(W_t > 0 | t) P(t) \\
                   & = \sum_{t = 1}^{n} (1 - \Phi(\frac{- \mu_t}{\sigma_t})P(t)),
\end{align*}
where $\Phi$ is the cumulative distribution function of the standard normal distribution.

For our permutation test, we calculate our p-value as
\[ p = \begin{cases} 
      P(W_t > 0) & S_{l, h, k} >= 0 \\
      P(W_t < 0) = 1 - P(W_t > 0) & S_{l, h, k} < 0.
   \end{cases}
\]
Using our permutation test, we can create a k-dimensional Wasserstein Bias Metric for a given set of clusters $(C_S^{l, h}, C_A^{l, h}, C_I^{l, h})$ for a layer $l$ and head $h$ to be $T_{l, h, k} = (1 - p)S_{l, h, k}$ where $p$ is the p-value of the permutation test performed on the cluster for the Wasserstein Statistic $S_{l, h, k}$. This metric allows us to incorporate how confident we are in the value of our statistic.

\subsection{Combined Wasserstein bias metric}

The 0-dimensional Wasserstein Bias Metric captures the differences in which words the model learns connections between, while the 1-dimensional Wasserstein Bias Metric captures the differences in the strength of these connections. Ideally, we would like a metric that combines both of these features so we average both metrics into the \emph{Combined Wasserstein Bias Metric}
\begin{align*}
    S_{l, h} = \frac{S_{l, h, 0} + S_{l, h, 1}}{2}.
\end{align*}

\section{Results and Discussion}
\label{section:results}

\subsection{Metric heat maps}

We can use our metric to show how much each head of the model contributes to the misrepresentation of a given category. In Figures \ref{fig:metricZscoreGender}, \ref{fig:metricZscoreProfession}, \ref{fig:metricZscoreRace}, and \ref{fig:metricZscoreReligion}, we calculate the z-score of the absolute value of the combined metric for each head within a group and then average the z-scores across groups within each category.

\begin{figure*}[t]
  \centering
  \includegraphics[width=.76\textwidth]{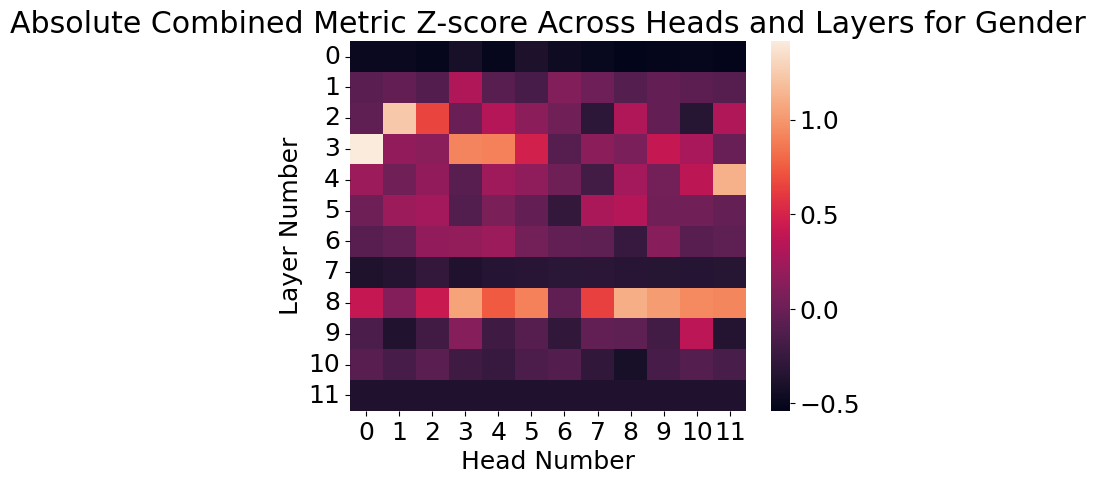}
  \caption{The average z-score of the absolute value of the Combined Wasserstein Bias Metric across groups in the gender category.}
  \label{fig:metricZscoreGender}
\end{figure*}

\begin{figure*}[t]
  \centering
  \includegraphics[width=.79\textwidth]{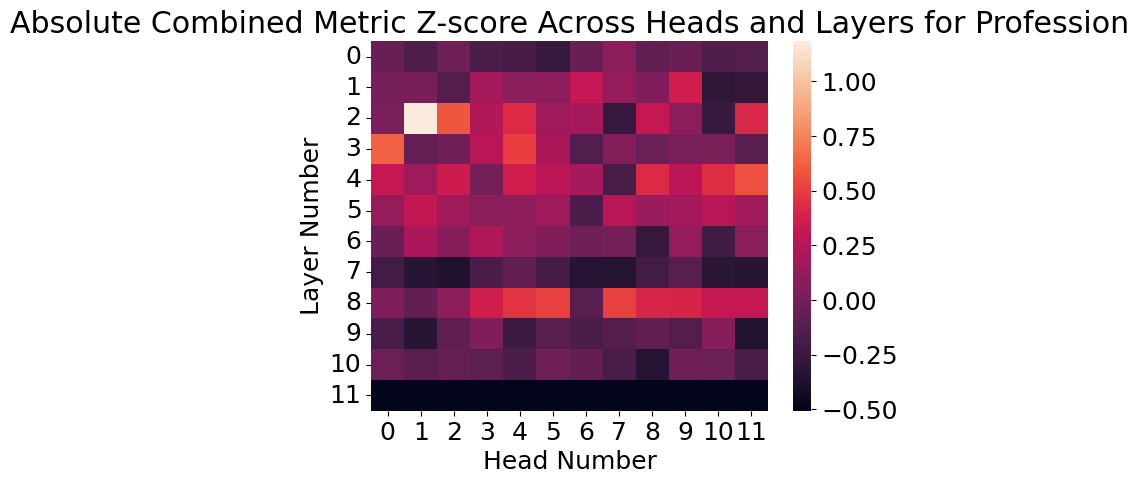}
  \caption{The average z-score of the absolute value of the Combined Wasserstein Bias Metric across groups in the profession category.}
  \label{fig:metricZscoreProfession}
\end{figure*}

\begin{figure*}[t]
  \centering
  \includegraphics[width=.74\textwidth]{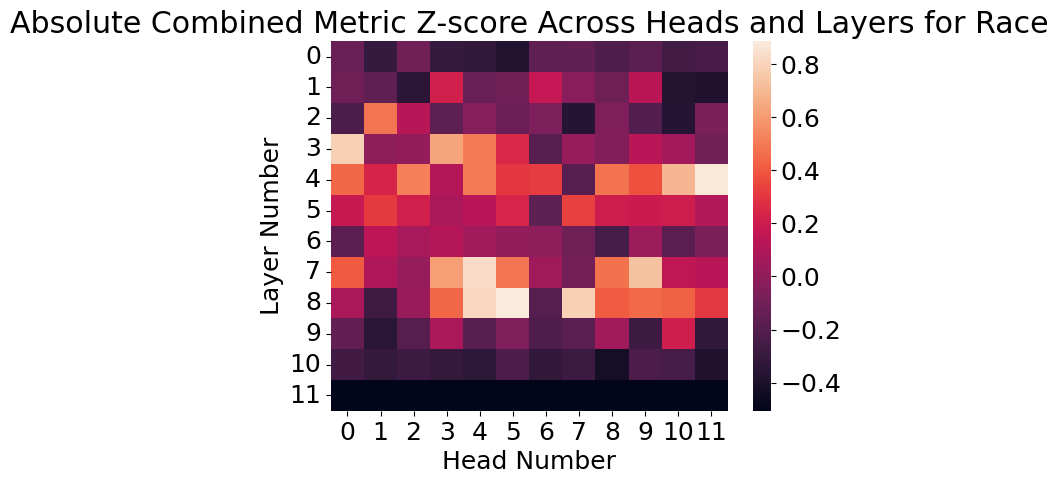}
  \caption{The average z-score of the absolute value of the Combined Wasserstein Bias Metric across groups in the race category.}
  \label{fig:metricZscoreRace}
\end{figure*}

\begin{figure*}[t]
  \centering
  \includegraphics[width=.77\textwidth]{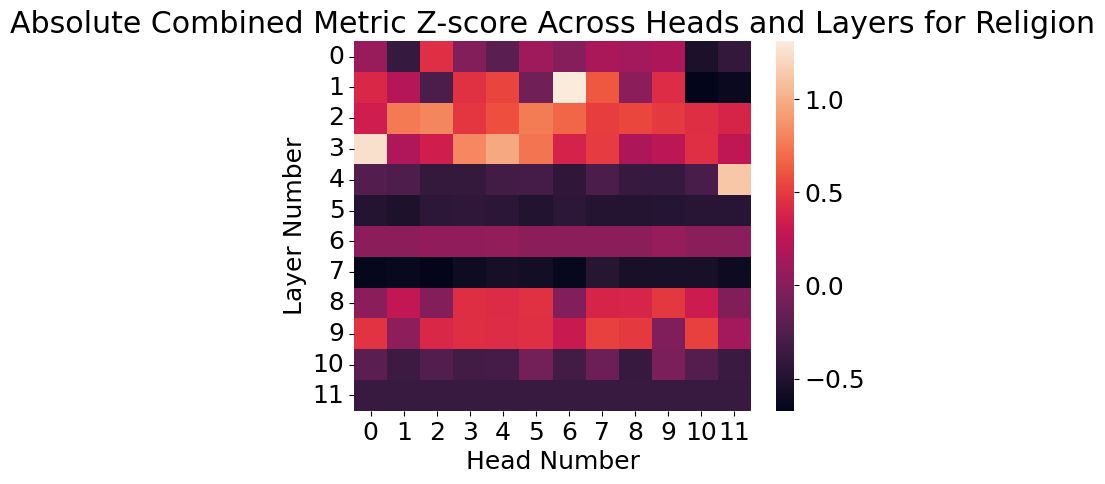}
  \caption{The average z-score of the absolute value of the Combined Wasserstein Bias Metric across groups in the religion category.}
  \label{fig:metricZscoreReligion}
\end{figure*}

For each category, we observe hot spots in the model where there is more bias in that particular category. We also note in Figures \ref{fig:metricZscoreGender} and \ref{fig:metricZscoreProfession} that the hot spots for gender and profession appear closer together, suggesting that their misrepresentation may be connected, similar to what is found by \citep{NEURIPS2021_1531beb7}.

Similar heat maps can be generated for any subgroup within a bias category, allowing us to identify which heads of the model contribute to the misrepresentation of certain groups. In the future, debiasing algorithms could target these heads in order to reduce bias without diminishing language modeling performance.

\subsection{Metric distributions across bias categories}

\begin{figure*}[t]
  \centering
  \includegraphics[width=.62\textwidth]{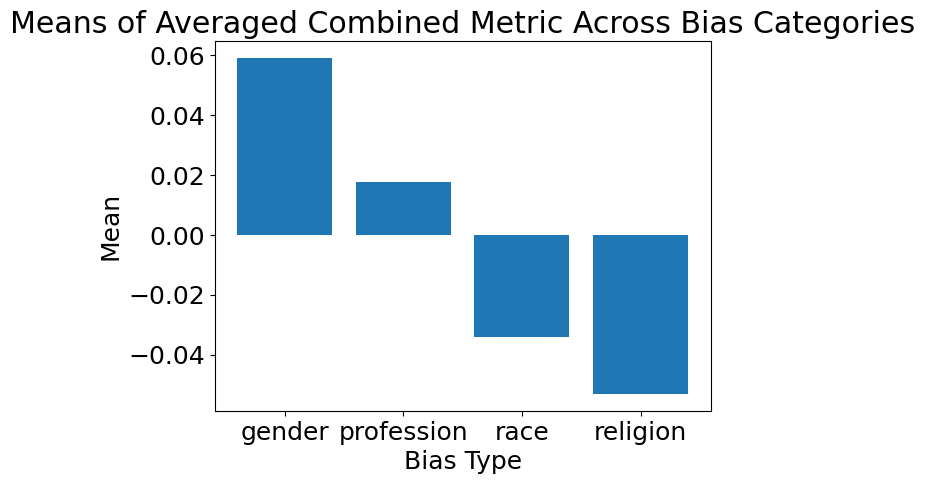}
  \caption{The mean of the averaged Combined Wasserstein Bias Metric across heads for each bias group.}
  \label{fig:meanCombinedMetric}
\end{figure*}

\begin{figure*}[t]
  \centering
  \includegraphics[width=.75\textwidth]{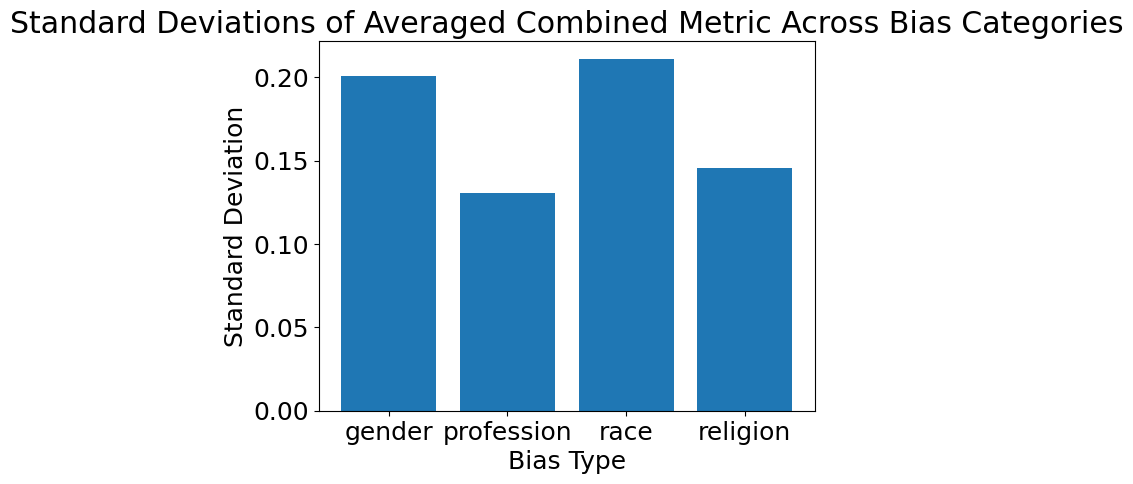}
  \caption{The standard deviation of the averaged Combined Wasserstein Bias Metric across heads for each bias group.}
  \label{fig:stdCombinedMetric}
\end{figure*}

We first observe from Figure \ref{fig:meanCombinedMetric} that the average value of the metric for each group is not significantly different from 0, indicating that, on average, there is no significant misrepresentation of these bias groups as a whole. This is consistent with the bias found by \citet{nadeem-etal-2021-stereoset} using the StereoSet dataset, with GPT-2 preferring stereotypes over anti-stereotypes only 52\% of the time. However, considering only the mean of the metrics ignores potential unfairness between subgroups within a bias category.

From Figure \ref{fig:stdCombinedMetric}, we see that the standard deviation of the observed metric between classes within each bias group is an order of magnitude greater than the mean, with some groups being represented much more negatively than others.

\subsection{Comparison of metrics across bias sub-groups}

We also examine the groups in each category with the most positive and most negative learned representations for both the number of connected components (0D) and the strength of connections (1D), as shown in Tables \ref{tab:gender}, \ref{tab:profession}, \ref{tab:race}, and \ref{tab:religion}. The more positive the value of the combined metric, the more negative the representation of the group.

\begin{table}
  \centering
  \begin{tabular}{lrrr}
    \toprule
    \textbf{Group Name} & \textbf{Mean} & \textbf{Min} & \textbf{Max} \\
    \midrule
    Sister     & 1.942 & -1.173 & 18.970          \\
    Herself     & 1.662 & -2.321 & 23.892           \\
    Schoolboy     & -0.122 & -2.763 & 3.529           \\
    Schoolgirl     & -0.467 & -8.697 & 6.679           \\
    \bottomrule
  \end{tabular}
  \caption{The mean, minimum, and maximum of the Combined Wasserstein Metric across heads for the \textit{gender} categories with the 2 largest and 2 smallest mean Combined Wasserstein Metric.}
  \label{tab:gender}
\end{table}

\begin{table}
  \centering
  \begin{tabular}{llll}
    \toprule
    \textbf{Group Name} & \textbf{Mean} & \textbf{Min} & \textbf{Max} \\
    \midrule
    Psychologist     & 1.615 & 0.030 & 10.572         \\
    Commander     & 0.951 & -10.374 & 14.972           \\
    Software Developer     & -0.913 & -13.235 & 1.760           \\
    Assistant     & -1.387 & -15.730 & 2.338           \\
    \bottomrule
  \end{tabular}
  \caption{The mean, minimum, and maximum of the Combined Wasserstein Metric across heads for the \textit{profession} categories with the 2 largest and 2 smallest mean Combined Wasserstein Metric.}
  \label{tab:profession}
\end{table}

\begin{table}
  \centering
  \begin{tabular}{llll}
    \toprule
    \textbf{Group Name} & \textbf{Mean} & \textbf{Min} & \textbf{Max} \\
    \midrule
    Sierra Leon     & 1.937 & -6.227 & 23.996          \\
    Cape Verde     & 1.047 & -1.124 & 7.943           \\
    Lebanon     & -1.492 & -13.279 & 0.348           \\
    Norway     & -1.638 & -19.774 & 6.098           \\
    \bottomrule
  \end{tabular}
  \caption{The mean, minimum, and maximum of the Combined Wasserstein Metric across heads for the \textit{race} categories with the 2 largest and 2 smallest mean Combined Wasserstein Metric.}
  \label{tab:race}
\end{table}

\begin{table}
  \centering
  \begin{tabular}{llll}
    \toprule
    \textbf{Group Name} & \textbf{Mean} & \textbf{Min} & \textbf{Max} \\
    \midrule
    Brahmin     & 0.419 & -0.164 & 2.232          \\
    Muslim     & 0.266 & -1.494 & 2.324           \\
    Bible     & -0.416 & -5.563 & 0.964           \\
    \bottomrule
  \end{tabular}
  \caption{The mean, minimum, and maximum of the Combined Wasserstein Metric across heads for all \textit{religion} categories.}
  \label{tab:religion}
\end{table}

A notable observation is that there is a large range of metric values for some groups, indicating that some of the heads are disproportionately contributing to the misrepresentation of these groups. We can see these heads on the metric heat maps. These hot spots appear to be larger in the race and gender categories.

\section{Conclusions and Future Work}
\label{section:conclusions}

Our study introduces the Wasserstein bias metric to determine the level of misrepresentation that self-attention-based large language models learn in particular heads. We have demonstrated that GPT-2 significantly misrepresents gender and race identity categories and have also developed a method to identify which heads contribute to the misrepresentation of a particular identity group. Our data show that there are specific heads that act as hot spots for misrepresentation in certain categories.

In the future, this metric could be used to determine which layers and heads of large language models should be fine-tuned to reduce bias without decreasing overall language modeling performance, and future work could investigate the reasons for increased misrepresentation in the beginning and end heads of a layer compared to the middle heads. This method could also be extended to other datasets containing different identity groups, and further improvements may include using longer input sentences or even paragraphs to better capture biases present at the paragraph level in the model.

\section{Limitations}
\label{section:limitations}

One limitation of this study is the use of the StereoSet dataset, which may not include all relevant groups and has known limitations as discussed by \citet{blodgett-etal-2021-stereotyping}. Due to these dataset-specific limitations, the findings presented here are not necessarily conclusive, and we recommend that practitioners adapt the proposed method to datasets developed for their particular domain, rather than using it as a definitive measure of bias. Another limitation is the use of an approximation for the pseudo-inverse distributions of topological features. This approximation may have introduced error into our statistic, but it was necessary to ensure computational tractability. Another challenge lies in interpreting the magnitude of our proposed metric, as it is not directly connected to the actual outputs of the model, such as the probability of given words. Instead, the metric should be used to compare bias between groups, rather than to quantify the absolute magnitude of bias in a single group. Finally, as the dataset contained only English sentences, the results of this study may differ with datasets in other languages, and future work could extend the metric to these cases.

\section{Ethical Considerations}
\label{section:ethics}

Although the methods proposed in this paper may not directly cause harm, the debiasing techniques they motivate could also be used to increase bias in large language models for particular groups. Care should be taken to prevent such misuse when applying these techniques.

\bibliographystyle{plainnat}
\bibliography{reference}

\clearpage

\appendix

\section{Proofs}\label{appendix:statShiftProofs}

For the following proofs, let $B$ be an arbitrary set of approximated pseudo-inverse distributions with smoothing parameter $n$. 

Similarly, let $B_A$, $B_S$, and $B_I$ be sets of approximated pseudo-inverse distributions for clusters of anti-stereotypes, stereotypes, and irrelevant attention graphs respectively.

Let $f_k = f(x)$ where $\frac{k}{n} \leq x < \frac{k + 1}{n}$ be the value of the approximated pseudo-inverse distribution $f$ at step $0 \leq k \leq n - 1$.

Let $x$ be an approximated pseudo-inverse distribution added to $B$ and $y$ be an approximated pseudo-inverse distribution removed from $B$ to obtain $B'$.

Let $\bar{b}$ be the cluster center of $B$ and let $\bar{b'}$ be the cluster center of $B'$. Let $r$ and $r'$ be the cluster variances of $B$ and $B'$ respectively.

We define $d(f, g)$ to be the average square difference between two functions
\begin{align*}
    d(f, g) & = \int_0^1 (f(z) - g(z))^2 dz \\ 
    & = \frac{1}{n} \series{0}{n - 1} (f_k - g_k)^2.
\end{align*}

Let $S$ and $S'$ be the values of the Wasserstein bias statistic generated for the set of clusters $(B_S, B_A, B_I)$ and $(B'_S, B'_A, B'_I)$ respectively.

\begin{proposition}\label{appendix:propA1}
    For any step $k$, $\bar{b}_k - \bar{b'}_k = \frac{y_k - x_k}{|B|}$.
\end{proposition}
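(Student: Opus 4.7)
The plan is to unfold both sides of the identity directly from the definition of the cluster center as a pointwise average. By the definition given in the preliminaries, the approximated cluster center satisfies $f^{-1}_{\bar{G},k,N}(z) = \frac{1}{|H|}\sum_{G \in H} f^{-1}_{G,k,N}(z)$, so evaluating at any step $k$ gives $\bar{b}_k = \frac{1}{|B|}\sum_{f \in B} f_k$ and similarly for $\bar{b'}_k$.

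First, I would note that since $B'$ is obtained from $B$ by removing $y$ and inserting $x$, we have $|B'| = |B|$ and $B' = (B \setminus \{y\}) \cup \{x\}$. Then I would write
\[
\sum_{f \in B'} f_k \;=\; \sum_{f \in B} f_k \;-\; y_k \;+\; x_k,
\]
which is just a telescoping of the one-element modification.

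Dividing by $|B|$ and subtracting the expression for $\bar{b'}_k$ from $\bar{b}_k$ then gives the claim immediately:
\[
\bar{b}_k - \bar{b'}_k \;=\; \frac{1}{|B|}\left(\sum_{f \in B} f_k - \sum_{f \in B'} f_k\right) \;=\; \frac{y_k - x_k}{|B|}.
\]

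There is really no obstacle here: the only thing to be careful about is that $x$ is being added and $y$ removed (so the sign in the numerator is $y_k - x_k$, not $x_k - y_k$), and that the denominator is $|B|$ rather than $|B|-1$ or $|B|+1$ because the cardinality of the cluster is preserved under a swap. Once these conventions are fixed from the setup paragraph preceding the proposition, the proof is a one-line manipulation of the averaging identity.
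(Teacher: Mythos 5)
Your proof is correct and takes essentially the same approach as the paper: both express the new mean (or equivalently the new sum) as the old one shifted by $x_k - y_k$ and then divide by the unchanged cardinality $|B|$. The paper states the identity for $\bar{b'}_k$ directly, while you write out the telescoped sums, but the underlying one-line manipulation is the same.
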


\begin{proof}
    Observe that
    \[
        \bar{b'}_k = \frac{|B|\bar{b}_k + x_k - y_k}{|B|} = \bar{b}_k + \frac{x_k - y_k}{|B|}.
    \]
    This leads us to find that 
    \begin{align*}
        \bar{b}_k - \bar{b'}_k & = \bar{b}_k - \bar{b}_k - \frac{x_k - y_k}{|B|} \\
        & = - \frac{x_k - y_k}{|B|} = \frac{y_k - x_k}{|B|}.
    \end{align*}
\end{proof}

\begin{proposition}\label{appendix:propA2}
    For every $b \in B'$, 
    \begin{align*}
        d(b, \bar{b'}) = d(b, \bar{b}) &+ \frac{1}{|B|^2} d(x, y) \\
        &+ \frac{2}{n} \series{0}{n - 1} \frac{(y_k - x_k)(b_k - \bar{b}_k)}{|B|}.
    \end{align*}
\end{proposition}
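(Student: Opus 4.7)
The plan is to reduce the claim to elementary algebra by invoking Proposition~\ref{appendix:propA1} and expanding the defining sum of $d(b, \bar{b'})$ term by term. The key observation is that the squared-difference metric $d$ behaves like a quadratic form, so shifting one of its arguments by a small perturbation produces exactly the three pieces on the right-hand side: the unshifted distance, a quadratic correction, and a cross term.

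First I would use Proposition~\ref{appendix:propA1} to write $\bar{b'}_k = \bar{b}_k - \frac{y_k - x_k}{|B|}$ at every step $k$. Next, I would start from the definition
\[
    d(b, \bar{b'}) = \frac{1}{n} \series{0}{n-1} (b_k - \bar{b'}_k)^2
\]
and rewrite the summand as $\bigl((b_k - \bar{b}_k) + \frac{y_k - x_k}{|B|}\bigr)^2$. Expanding this square produces three pieces: $(b_k - \bar{b}_k)^2$, $2\,\frac{(y_k - x_k)(b_k - \bar{b}_k)}{|B|}$, and $\frac{(y_k - x_k)^2}{|B|^2}$. Summing over $k$ and dividing by $n$, the first piece reassembles into $d(b, \bar{b})$, the third piece (after pulling the constant $\frac{1}{|B|^2}$ out of the sum) becomes $\frac{1}{|B|^2} d(x, y)$ by the definition of $d$, and the middle piece is precisely the stated cross term.

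There is no real obstacle here; the proof is a direct substitution followed by expanding a square and recognizing each resulting sum as an instance of the definition of $d$. The only subtlety worth double-checking is a sign: Proposition~\ref{appendix:propA1} gives $\bar{b}_k - \bar{b'}_k = \frac{y_k - x_k}{|B|}$, so that $b_k - \bar{b'}_k = (b_k - \bar{b}_k) + \frac{y_k - x_k}{|B|}$ with a plus sign, which lines up correctly with the $+\frac{2}{n}\sum \frac{(y_k - x_k)(b_k - \bar{b}_k)}{|B|}$ cross term in the statement.
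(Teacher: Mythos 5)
Your proposal is correct and takes essentially the same route as the paper: both expand the square $(b_k-\bar{b'}_k)^2$ via the decomposition $b_k-\bar{b'}_k=(b_k-\bar{b}_k)+(\bar{b}_k-\bar{b'}_k)$, substitute Proposition~\ref{appendix:propA1}, and recognize the quadratic correction as $\frac{1}{|B|^2}d(x,y)$. The only cosmetic difference is that you substitute the expression from Proposition~\ref{appendix:propA1} before expanding the square, whereas the paper expands first and substitutes afterward.
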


\begin{proof}
    Let $k$ be an arbitrary step. Observe that the squared difference between the old step and the new step mean is
    \begin{align*}
        (b_k - \bar{b'}_k)^2 & = ((b_k - \bar{b}_k) + (\bar{b}_k - \bar{b'}_k))^2 \\
        & = (b_k - \bar{b}_k)^2 + (\bar{b}_k - \bar{b'}_k)^2 \\ & + 2(b_k - \bar{b}_k)(\bar{b}_k - \bar{b'}_k).
    \end{align*}
    So the distance between an old pseudo-inverse distribution and the new mean is
    \begin{align*}
        d(b, \bar{b'}) & = \frac{1}{n}\series{0}{n - 1} (b_k - \bar{b'}_k)^2 \\ 
        & = \frac{1}{n}\series{0}{n - 1} [(b_k - \bar{b}_k)^2 + (\bar{b}_k - \bar{b'}_k)^2 \\ 
        & + 2(b_k - \bar{b}_k)(\bar{b}_k - \bar{b'}_k)] \\
        & = \frac{1}{n} \series{0}{n - 1}(b_k - \bar{b}_k)^2 + \frac{1}{n} \series{1}{n}(\bar{b}_k - \bar{b'}_k)^2 \\ 
        & + \frac{1}{n} \series{0}{n - 1} 2(b_k - \bar{b}_k)(\bar{b}_k - \bar{b'}_k).
    \end{align*}
    Using Proposition \ref{appendix:propA1}, we can substitute so 
    \begin{align*}
        d(b, \bar{b'}) & = d(b, \bar{b}) + \frac{1}{n} \series{0}{n - 1} \frac{(y_k - x_k)^2}{|B|^2} \\ & + \frac{2}{n} \series{0}{n - 1} \frac{(y_k - x_k)(b_k - \bar{b}_k)}{|B|}.
    \end{align*}
    From the definition of the distance function, $\frac{1}{n} \series{0}{n - 1} (y_k - x_k)^2 = d(x, y)$ so
    \begin{align*}
        d(b, \bar{b'}) & = d(b, \bar{b}) + \frac{1}{|B|^2} d(x, y) \\ 
        & + \frac{2}{n} \series{0}{n - 1} \frac{(y_k - x_k)(b_k - \bar{b}_k)}{|B|}.
    \end{align*}
    
\end{proof}

\begin{proposition}\label{appendix:propA3}
    The sum $$\sum_{b \in B} [\frac{2}{n} \series{0}{n - 1} \frac{(y_k - x_k)(b_k - \bar{b}_k)}{|B|}] = 0.$$
\end{proposition}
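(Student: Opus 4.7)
The plan is to exploit linearity and the defining property of the cluster center $\bar{b}$. First I would interchange the order of the two summations: since the outer sum is over $b \in B$ and the inner sum over the step index $k$ is finite, Fubini is trivial and we may rewrite the expression as
\[
\frac{2}{n|B|} \sum_{k=0}^{n-1} (y_k - x_k) \sum_{b \in B} (b_k - \bar{b}_k).
\]
The factor $(y_k - x_k)$ depends only on $k$ and not on $b$, so it pulls out of the inner sum cleanly.

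Next I would appeal directly to the definition of the cluster center given earlier in the preliminaries, namely $\bar{b}_k = \frac{1}{|B|}\sum_{b \in B} b_k$ at every step $k$. This immediately gives $\sum_{b \in B} b_k = |B|\, \bar{b}_k$, so
\[
\sum_{b \in B} (b_k - \bar{b}_k) = |B|\bar{b}_k - |B|\bar{b}_k = 0
\]
for each individual step $k$. Substituting this back makes every term of the outer sum over $k$ vanish, yielding the claimed identity.

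There is no real obstacle here: the statement is essentially the step-wise analogue of the fact that deviations from the mean sum to zero, and the only thing to be careful about is making sure the swap of summations is legitimate and that the factor being pulled out truly does not depend on $b$. Both conditions are transparently satisfied, so the proof is a short one-line computation once the sums are reordered.
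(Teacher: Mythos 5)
Your proof is correct and follows essentially the same route as the paper's: both interchange the order of summation, pull the $b$-independent factor $(y_k - x_k)$ out of the inner sum, and invoke the fact that $\sum_{b \in B}(b_k - \bar{b}_k) = 0$ because $\bar{b}_k$ is the mean of the $b_k$. The only minor difference is that you justify the vanishing via the explicit definition $\bar{b}_k = \frac{1}{|B|}\sum_{b\in B} b_k$, whereas the paper simply asserts the zero-sum-of-deviations property; either way the argument is identical in substance.
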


\begin{proof}
    Observe that we can move the outside summation inwards so
    \begin{align*}
        \sum_{b \in B} [\frac{2}{n}& \series{0}{n - 1} \frac{(y_k - x_k)(b_k - \bar{b}_k)}{|B|}] \\
        &= \frac{2}{n}[\series{0}{n - 1} \frac{(y_k - x_k)(\sum_{b \in B}(b_k - \bar{b}_k))}{|B|}]
    \end{align*}
    because $\sum_{b \in B}(b_k - \bar{b}_k)$ is the sum of the differences from the mean, $\sum_{b \in B}(b_k - \bar{b}_k) = 0$.
    Thus we can simplify our expression to
    \begin{align*}
        \frac{2}{n}[\series{0}{n - 1}& \frac{(y_k - x_k)(\sum_{b \in B}(b_k - \bar{b}_k))}{|B|}] \\ 
        &= \frac{2}{n}[\series{0}{n - 1} \frac{(y_k - x_k)(0)}{|B|}] = 0.
    \end{align*}
    Thus we have shown that $$\sum_{b \in B} [\frac{2}{n} \series{0}{n - 1} \frac{(y_k - x_k)(b_k - \bar{b}_k)}{|B|}] = 0.$$
\end{proof}

\begin{proposition}\label{appendix:propA4}
    The variance of the new cluster is $$r' = r + \frac{1}{|B|}[d(x,\bar{b}) - d(y,\bar{b}) - \frac{1}{|B|}d(x,y)].$$
\end{proposition}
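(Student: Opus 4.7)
The plan is to expand $r'$ using the definition of cluster variance and then reduce every term either to $r$, to the already-derived identity in Proposition \ref{appendix:propA2}, or to the vanishing sum in Proposition \ref{appendix:propA3}. Since swapping preserves cardinality, $|B'| = |B|$, and so
\begin{align*}
r' &= \frac{1}{|B|}\sum_{b' \in B'} d(b', \bar{b'}) \\
   &= \frac{1}{|B|}\Bigl[\sum_{b \in B} d(b, \bar{b'}) + d(x, \bar{b'}) - d(y, \bar{b'})\Bigr].
\end{align*}
The strategy is to handle the sum over $B$ and the two isolated $x,y$ terms separately, using Proposition \ref{appendix:propA2} as a rewrite rule.

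First I would apply Proposition \ref{appendix:propA2} termwise to $\sum_{b \in B} d(b, \bar{b'})$. The constant contribution $\frac{1}{|B|^2}d(x,y)$ appears $|B|$ times, yielding $\frac{1}{|B|}d(x,y)$. The remaining cross term $\frac{2}{n}\sum_{k=0}^{n-1}\frac{(y_k - x_k)(b_k - \bar{b}_k)}{|B|}$ summed over $b \in B$ is exactly the quantity shown to be zero in Proposition \ref{appendix:propA3}. The leading terms $\sum_{b \in B} d(b, \bar{b})$ collapse to $|B|r$ by definition. So this piece equals $|B|r + \frac{1}{|B|}d(x,y)$.

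Next I would compute $d(x,\bar{b'}) - d(y,\bar{b'})$ by applying Proposition \ref{appendix:propA2} once with $b = x$ and once with $b = y$. The $\frac{1}{|B|^2}d(x,y)$ contributions cancel in the difference. The leftover cross terms combine to
\[
\frac{2}{n|B|}\sum_{k=0}^{n-1}(y_k - x_k)\bigl[(x_k - \bar{b}_k) - (y_k - \bar{b}_k)\bigr] = -\frac{2}{n|B|}\sum_{k=0}^{n-1}(y_k - x_k)^2 = -\frac{2}{|B|}d(x,y).
\]
Thus $d(x,\bar{b'}) - d(y,\bar{b'}) = d(x,\bar{b}) - d(y,\bar{b}) - \frac{2}{|B|}d(x,y)$. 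Adding this to the sum over $B$ and dividing by $|B|$ produces
\[
r' = r + \frac{1}{|B|}\Bigl[d(x,\bar{b}) - d(y,\bar{b}) + \frac{1}{|B|}d(x,y) - \frac{2}{|B|}d(x,y)\Bigr],
\]
which simplifies to the stated formula.

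The only subtle step is the algebraic cancellation in the $T(x) - T(y)$ computation, where the factor $(x_k - \bar{b}_k) - (y_k - \bar{b}_k) = -(y_k - x_k)$ is what produces the sign that converts the $+\frac{1}{|B|^2}d(x,y)$ (carried forward from the sum over $B$) into the claimed $-\frac{1}{|B|^2}d(x,y)$. Everything else is a mechanical bookkeeping exercise once Propositions \ref{appendix:propA2} and \ref{appendix:propA3} are available.
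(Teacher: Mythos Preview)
Your proposal is correct and follows essentially the same approach as the paper: both start from $|B|r' = \sum_{b\in B} d(b,\bar{b'}) + d(x,\bar{b'}) - d(y,\bar{b'})$, apply Proposition~\ref{appendix:propA2} to every term, use Proposition~\ref{appendix:propA3} to kill the aggregated cross term over $B$, and then collect the $d(x,y)$ contributions. The only cosmetic difference is that you treat the sum over $B$ and the $x,y$ difference as two separate subcomputations, whereas the paper expands all three pieces at once before combining like terms.
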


\begin{proof}
    Observe that by definition $$|B|r' = \sum_{b \in B}[d(b, \bar{b'})] + d(x, \bar{b'}) - d(y, \bar{b'}).$$
    Using Proposition \ref{appendix:propA2}, we can substitute and have
    \begin{align*}
        |B|r' & = \sum_{b \in B}[d(b, \bar{b}) + \frac{1}{|B|^2} d(x, y) \\ 
        & + \frac{2}{n} \series{0}{n - 1} \frac{(y_k - x_k)(b - \bar{b}_k)}{|B|}] \\ 
        & + [d(x, \bar{b}) + \frac{1}{|B|^2} d(x, y) \\
        & + \frac{2}{n} \series{0}{n - 1} \frac{(y_k - x_k)(x_k - \bar{b}_k)}{|B|}] \\ 
        & - [d(y, \bar{b}) + \frac{1}{|B|^2} d(x, y) \\
        & + \frac{2}{n} \series{0}{n - 1} \frac{(y_k - x_k)(y_k - \bar{b}_k)}{|B|}].
    \end{align*}
    Applying Proposition \ref{appendix:propA3} to simplify and combining like terms, we have
    \begin{align*}
        |B|r' & = \sum_{b \in B}[d(b, \bar{b}) + \frac{1}{|B|^2} d(x, y)] \\ 
        & + d(x, \bar{b}) - d(y, \bar{b}) - \frac{2}{|B|} \series{0}{n - 1}\frac{(y_k - x_k)^2}{n}.
    \end{align*}
    Further simplifying gives us,
    \begin{align*}
        |B|r' & = \sum_{b \in B} d(b, \bar{b}) + \sum_{b \in B} \frac{1}{|B|^2} d(x,y) \\ 
        & + d(x,\bar{b}) - d(y,\bar{b}) - \frac{2}{|B|} d(x, y) \\
        & = \sum_{b \in B} d(b, \bar{b}) + \frac{1}{|B|} d(x,y) \\
        & + d(x,\bar{b}) - d(y,\bar{b}) - \frac{2}{|B|} d(x, y) \\
        & = \sum_{b \in B} d(b, \bar{b}) + d(x,\bar{b}) - d(y,\bar{b}) - \frac{1}{|B|} d(x, y).
    \end{align*}
    Dividing both sides by $|B|$ gives us, $$r' = r + \frac{1}{|B|}[d(x, \bar{b}) - d(y, \bar{b}) - \frac{1}{|B|} d(x, y)].$$

\end{proof}

\begin{proposition}\label{appendix:propA5}
The value of the statistic for the new clusters $(B'_S, B'_A, B'_I)$ is 
\begin{align*}
    S' & = S + \frac{1}{|B_S|r_I}[d(x, \bar{b}_A) + d(x, \bar{b}_S) \\
    & - d(y, \bar{b}_A) - d(y, \bar{b}_S)],
\end{align*}
where $r_I$ is the variance of the irrelevant cluster $B_I$, and $x$ is the approximated pseudo-inverse distribution from $B_S$ swapped with the approximated pseudo-inverse distribution $y$ from $B_A$.
\end{proposition}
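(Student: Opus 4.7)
The plan is to apply Proposition~\ref{appendix:propA4} separately to the stereotype and anti-stereotype clusters, exploit the fact that $B_I$ is untouched by the swap, and combine the two variance updates in the definition of the statistic.

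First I would set up the bookkeeping carefully, because Proposition~\ref{appendix:propA4} is stated for the specific operation ``add $x$, remove $y$,'' and the swap plays the two roles in opposite directions on $B_S$ and $B_A$. For $B_A' = B_A \setminus \{y\} \cup \{x\}$ the proposition applies directly with the paper's $x$ and $y$, giving
\begin{align*}
    r_A' = r_A + \frac{1}{|B_A|}\Bigl[d(x,\bar{b}_A) - d(y,\bar{b}_A) - \frac{1}{|B_A|}d(x,y)\Bigr].
\end{align*}
For $B_S' = B_S \setminus \{x\} \cup \{y\}$ the roles are reversed, so applying the same proposition with $x$ and $y$ swapped yields
\begin{align*}
    r_S' = r_S + \frac{1}{|B_S|}\Bigl[d(y,\bar{b}_S) - d(x,\bar{b}_S) - \frac{1}{|B_S|}d(x,y)\Bigr],
\end{align*}
using the symmetry $d(x,y) = d(y,x)$.

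Next I would note that the paired construction forces $|B_S| = |B_A|$ (both clusters are built from the same set of context sentences), and that $B_I$ is unaffected by the swap, so $r_I' = r_I$. Forming the numerator of $S'$ then gives
\begin{align*}
    r_A' - r_S' &= (r_A - r_S) + \frac{1}{|B_S|}\bigl[d(x,\bar{b}_A) + d(x,\bar{b}_S) \\
    &\quad - d(y,\bar{b}_A) - d(y,\bar{b}_S)\bigr],
\end{align*}
where the two $-\tfrac{1}{|B_S|^2}d(x,y)$ contributions cancel because they enter $r_A'$ and $r_S'$ with opposite signs in the difference. Dividing both sides by $r_I$ and using $S = (r_A - r_S)/r_I$ yields the stated formula.

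I expect the main obstacle to be purely clerical rather than conceptual: the sign and role reversal in applying Proposition~\ref{appendix:propA4} to $B_S$ is the easiest place to slip, and one must verify explicitly that the cross term $d(x,y)$ cancels in the difference $r_A' - r_S'$ rather than contributing. Once that cancellation is confirmed and $|B_S| = |B_A|$ is noted, the proof is essentially a one-line substitution.
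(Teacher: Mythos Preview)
Your proposal is correct and follows essentially the same route as the paper: apply Proposition~\ref{appendix:propA4} to $B_A$ and to $B_S$ (with the roles of $x$ and $y$ reversed), subtract, observe the $d(x,y)$ terms cancel, and divide by $r_I$. The only cosmetic difference is that the paper writes $|B_S|$ throughout from the start rather than first writing $|B_A|$ and then invoking $|B_S|=|B_A|$, but your explicit justification of that equality and of $r_I'=r_I$ is if anything a small improvement in rigor.
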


\begin{proof}
    Using Proposition \ref{appendix:propA4}, the anti-stereotype cluster variance can be calculated as 
    \begin{align*}
        r'_A & = r_A + \frac{1}{|B_S|}[d(x, \bar{b}_A) - d(y, \bar{b}_A) \\ 
        & - \frac{1}{|B_S|}d(x,y)].
    \end{align*}
    Similarly, the stereotype cluster variance is 
    \begin{align*}
        r'_S & = r_S + \frac{1}{|B_S|}[d(y, \bar{b}_S) - d(x, \bar{b}_S) \\ 
        & - \frac{1}{|B_S|}d(x,y)].
    \end{align*}
    Then, by the definition of the Wasserstein Bias Statistic 
    \begin{align*}
        S' & = \frac{r'_A - r'_S}{r_I} \\ 
        & = \frac{r_A - r_S}{r_I} \\ 
        & + \frac{1}{|B_S|r_I} [d(x, \bar{b}_A) + d(x, \bar{b}_S) \\ 
        & - d(y, \bar{b}_A) - d(y, \bar{b}_S)] \\ 
        & = S + \frac{1}{|B_S|r_I} [d(x, \bar{b}_A) + d(x, \bar{b}_S) \\ 
        & - d(y, \bar{b}_A) - d(y, \bar{b}_S)].
    \end{align*}
\end{proof}

\section{Proofs}\label{appendix:statShiftDistribution}

Let $S$ be the statistic calculated for the clusters $(C_S, C_A, C_I)$ and let $n = |C_S|$ be the cluster size.

Let $A \in R^{n \times n}$ be the swap matrix, where $A_{i, j}$ is the change in the statistic caused by swapping the $i$-th stereotype in $C_S$ with the $j$-th anti-stereotype in $C_A$.

We define $\bar{A} = \frac{1}{n^2} \sum_{s, t} A_{s, t}$ as the average of all elements of $A$. Similarly, $\text{Var}(A)$ is the variance of the elements of the matrix $A$. 

We define $A_{i, :}$ as the $i$-th row and $A_{:, j}$ as the $j$-th column. We define $A^{[i, j]}$ as the matrix obtained by removing the $i$-th row and $j$-th column of $A$.

Define a random process where we randomly choose $1 \leq t \leq |C_S|$ stereotypes and anti-stereotype pseudo-inverse distributions to swap. We let $W_t = c_1 + ... + c_t$ where $c_i$ is the change to the statistic caused by the $i$-th swap. By finding the distribution of $W_t$, we can determine the how the Wasserstein Bias Statistic will change as we permute our clusters.

\begin{proposition}\label{appendix:propB1}
    The probability of generating a swap of length $t$ is $P(t) = \frac{N_t}{\sum_{1 \leq k \leq n} N_k}$ where $N_i = \binom{n}{i}^2 i!$ for $1 \leq i \leq n$.
\end{proposition}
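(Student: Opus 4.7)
The plan is to describe the sample space of swaps explicitly, count the swaps of each length by elementary multiplication, and then normalize.

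First I would pin down what a \emph{swap of length} $t$ actually is. It is determined by three independent choices: a size-$t$ subset of $C_S$ whose elements will be moved, a size-$t$ subset of $C_A$ whose elements will be moved, and a bijection between these two subsets specifying which stereotype distribution is interchanged with which anti-stereotype distribution. The random process in the permutation test is the uniform distribution over the set of all such triples with length $t$ ranging from $1$ to $n$; making this interpretation explicit is essentially the only conceptual step.

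Next I would count. The stereotypes can be chosen in $\binom{n}{t}$ ways, the anti-stereotypes in $\binom{n}{t}$ ways, and the pairing between them in $t!$ ways. Because these choices are independent, the number of length-$t$ swaps is $N_t = \binom{n}{t}^2 t!$. Summing over $t$ gives the size $\sum_{k=1}^{n} N_k$ of the full sample space, so the uniform distribution assigns each individual swap probability $1/\sum_{k=1}^{n} N_k$. The event \emph{``the sampled swap has length $t$''} contains exactly $N_t$ outcomes, so $P(t) = N_t / \sum_{k=1}^{n} N_k$ follows immediately.

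The main obstacle is interpretational rather than technical: one has to justify that "randomly choose $1 \leq t \leq |C_S|$ stereotypes and anti-stereotype pseudo-inverse distributions to swap" means uniform sampling over the set of all valid swap triples of any length, as opposed to, say, first drawing $t$ uniformly from $\{1,\dots,n\}$ and then choosing a length-$t$ swap uniformly. Once the sample space is fixed as above, the proof reduces to a single line of counting and division.
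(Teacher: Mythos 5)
Your proof is correct and follows essentially the same counting argument as the paper: $\binom{n}{t}$ choices of stereotypes, $\binom{n}{t}$ choices of anti-stereotypes, $t!$ pairings, giving $N_t = \binom{n}{t}^2 t!$, then normalize. The interpretational caveat you raise (uniform over all swap triples vs.\ first drawing $t$ uniformly) is a fair observation about an ambiguity the paper leaves implicit, but it does not change the approach.
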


\begin{proof}
    We first determine that the number of swaps of length $t$ is $N_t = \binom{n}{t}^2 t!$. This is because there are $\binom{n}{t}$ different ways to choose the stereotype to swap and $\binom{n}{t} * t!$ different ways to match anti-stereotypes to each stereotype. Therefore the distribution of $t$ is $$P(t) = \frac{N_t}{\sum_{1 \leq k \leq n} N_k}.$$
\end{proof}

\begin{proposition}\label{appendix:propB2}
    The expected value of a swap given the length is $E(W_t | t) = t\bar{A}$.
\end{proposition}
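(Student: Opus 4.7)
The plan is to exploit the fact that, conditional on the swap length $t$, the random swap is a uniformly chosen partial matching of size $t$ between the $n$ stereotype indices and the $n$ anti-stereotype indices. Using Proposition~\ref{appendix:propA5} (together with the remark in the main text that each $c_i$ is independent of the other swaps), we may identify the $i$-th contribution with the corresponding entry of the swap matrix $A$, so that $W_t = \sum_{i=1}^{t} A_{s_i, u_i}$ where $(s_1, u_1), \ldots, (s_t, u_t)$ are the random pairs defining the swap.

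First I would apply linearity of expectation to write
\begin{align*}
E(W_t \mid t) \;=\; \sum_{s=1}^{n} \sum_{u=1}^{n} A_{s,u} \, P\bigl((s,u) \in \text{swap} \,\big|\, t\bigr),
\end{align*}
reducing the problem to computing the marginal probability that a fixed pair $(s,u)$ appears among the $t$ swapped pairs.

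Next I would compute this marginal probability by a counting argument parallel to Proposition~\ref{appendix:propB1}. Fixing a specific pair $(s,u)$, the number of length-$t$ swaps containing it is obtained by choosing the remaining $t-1$ stereotype indices from the other $n-1$, the remaining $t-1$ anti-stereotype indices from the other $n-1$, and matching them in $(t-1)!$ ways, for a total of $\binom{n-1}{t-1}^2 (t-1)!$. Dividing by $N_t = \binom{n}{t}^2 t!$ and simplifying the factorials yields $t/n^2$, independent of the choice of $(s,u)$, as symmetry predicts.

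Substituting back and using the definition $\bar{A} = \frac{1}{n^2} \sum_{s,u} A_{s,u}$ from the appendix preamble, we obtain
\begin{align*}
E(W_t \mid t) \;=\; \frac{t}{n^2} \sum_{s=1}^{n}\sum_{u=1}^{n} A_{s,u} \;=\; t\bar{A}.
\end{align*}
The main step is really the marginal probability computation; once $t/n^2$ is in hand, the conclusion follows in one line from linearity. The most delicate part to write out is the appeal to Proposition~\ref{appendix:propA5}, which is what permits identifying each individual $c_i$ with a fixed entry of $A$ computed from the original clusters rather than with a swap-dependent quantity involving the evolving cluster centers.
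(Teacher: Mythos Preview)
Your argument is correct and follows essentially the same idea as the paper's proof: both rest on the observation that, by symmetry of the uniform partial matching, every entry of $A$ is equally likely to participate in the swap, so each $c_i$ has expectation $\bar{A}$ and linearity gives $t\bar{A}$. The paper states this symmetry in one sentence without justification, whereas you make it explicit by counting the length-$t$ swaps containing a fixed pair and obtaining the marginal probability $t/n^2$; this is a more detailed version of the same argument rather than a different route.
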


\begin{proof}
    This result stems directly from the fact that each entry of the swap matrix $A$ has an equal probability of being in the $i$-th swap, making the expected value of a single swap $\bar{A}$. There are $t$ such swaps, meaning that $E(W_t | t) = t\bar{A}$. 
\end{proof}

\begin{proposition}\label{appendix:propB3}
    The variance of a swap given the length of the swap is 
    \begin{align*}
        \text{Var}(W_t | t) & = E(W_t^2 | t) - E(W_t)^2 \\
                            & = t[\text{Var}(A) + \bar{A}^2] \\ 
                            & + (\frac{t^2 - t}{n^2 (n - 1)^2}) \\
                            & [(\sum_{s,t} A_{s, t})^2 \\ 
                            & - \sum_s (\sum_t A_{s, t})^2 - \sum_t (\sum_s A_{s, t})^2 \\
                            & + \sum_{s, t} A_{s, t}^2] \\ 
                            & - (t\bar{A})^2.
    \end{align*}
\end{proposition}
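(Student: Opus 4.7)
The plan is to compute $E[W_t^2 \mid t]$ by expanding $W_t^2 = \sum_{i=1}^t \sum_{j=1}^t c_i c_j$, handling the diagonal terms ($i=j$) separately from the off-diagonal terms ($i \neq j$), then subtracting $E[W_t \mid t]^2 = (t\bar{A})^2$ from Proposition B2. By Proposition A5, each swap contributes a single entry of the matrix $A$, so $c_i = A_{S_i, T_i}$, where $(S_i, T_i)$ is the random row-column pair occupying position $i$. The entire argument then reduces to computing the univariate and bivariate expectations of this pair under the swap distribution.

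First I would establish two symmetry facts about the conditional law of the swaps. A length-$t$ swap is obtained by picking $t$ distinct rows, $t$ distinct columns, and a bijection between them, with the $t$ resulting pairs labeled uniformly at random across positions $1, \ldots, t$. By a direct counting argument (ratio of the number of length-$t$ swaps containing a prescribed pair to all length-$t$ swaps, times the $1/t$ probability that it lands at position $i$), the marginal of $(S_i, T_i)$ is uniform on $[n] \times [n]$, which gives $E[c_i^2 \mid t] = \frac{1}{n^2}\sum_{s,t} A_{s,t}^2 = \text{Var}(A) + \bar{A}^2$, accounting for the $t[\text{Var}(A)+\bar{A}^2]$ contribution from the $t$ diagonal entries. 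An analogous count (using length-$(t{-}2)$ sub-swaps on the remaining rows and columns) shows that for $i \neq j$ the joint of $((S_i, T_i), (S_j, T_j))$ is uniform on the $n^2(n-1)^2$ quadruples $(s_1, t_1, s_2, t_2)$ with $s_1 \neq s_2$ and $t_1 \neq t_2$.

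Next, using this bivariate distribution, I would write $E[c_i c_j \mid t] = \frac{1}{n^2(n-1)^2} \sum_{s_1 \neq s_2,\ t_1 \neq t_2} A_{s_1, t_1} A_{s_2, t_2}$ and expand the restricted sum by inclusion-exclusion on the two distinctness constraints: start from $(\sum_{s,t} A_{s,t})^2$; subtract $\sum_s (\sum_t A_{s,t})^2$ (violating $s_1 \neq s_2$) and $\sum_t (\sum_s A_{s,t})^2$ (violating $t_1 \neq t_2$); then add back $\sum_{s,t} A_{s,t}^2$ to correct for the diagonal that was subtracted twice. Multiplying this bracket by the $t^2 - t$ ordered off-diagonal pairs $(i,j)$, adding the diagonal contribution, and subtracting $(t\bar{A})^2$ yields exactly the stated formula.

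The main obstacle is the bivariate symmetry claim: although it looks like exchangeability, the matching structure tightly couples the row and column coordinates, so $(S_i, T_i)$ and $(S_j, T_j)$ are not independent and the uniformity on quadruples with distinct coordinates must be verified by an explicit counting argument rather than appealed to as obvious. Once that distributional fact is pinned down, the remainder is routine inclusion-exclusion bookkeeping and linearity of expectation.
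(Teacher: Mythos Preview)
Your proposal is correct and follows essentially the same route as the paper: the paper also splits $E(W_t^2\mid t)$ into diagonal and off-diagonal parts, obtains $E(c_i^2\mid t)=\text{Var}(A)+\bar A^2$ from uniformity of a single swap over the entries of $A$, and computes $E(c_ic_j\mid t)$ by the same inclusion--exclusion over the constraint $s_1\neq s_2,\ t_1\neq t_2$, written there as $\frac{1}{n^2}\sum_{s,t}A_{s,t}\bar A^{[s,t]}$ before expansion. The only difference is presentational: you state the bivariate law directly as uniform on the $n^2(n-1)^2$ admissible quadruples and justify it by a counting argument, whereas the paper conditions on $c_i=A_{s,t}$ and asserts the conditional law of $c_j$ without spelling out the count; your version is slightly more explicit on exactly the point you flag as the main obstacle.
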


\begin{proof}
    First, we find the expected value of $W_t^2$ as
    \begin{align*}
        E(W_t^2 | t) & = E((c_1 + c_2 + ... + c_t)^2 | t) \\
        & = E(\sum_{i = 1}^{t} c_i^2 + \sum_{i \neq j} c_i c_j | t) \\
        & = E(\sum_{i = 1}^{t} c_i^2 | t) + E(\sum_{i \neq j} c_i c_j | t).
    \end{align*}
    We find the expected value of $c_i^2$ using the formula for variance
    $$\text{Var}(c_i | t) = E(c_i^2 | t) - E(c_i | t)^2.$$
    This gives us
    \begin{align*}
        E(c_i^2 | t) & = \text{Var}(c_i | t) +  E(c_i | t)^2 \\
        & = \text{Var}(c_i) +  E(c_i)^2 \\
        & = \text{Var}(A) + \bar{A}^2.
    \end{align*}
    Also observe because each pair $c_i, c_j$ is chosen independently of the number of swaps, 
    \begin{align*}
        E(c_i c_j | t) & = E(c_i c_j) = \frac{1}{n^2} \sum_{s, t} E(c_i c_j | c_i = A_{s, t}) \\ 
        & = \frac{1}{n^2} \sum_{s, t} A_{s, t} E(c_j | c_j \notin A_{s, :} \text{ and } c_j \notin A_{:, t}) \\
        & = \frac{1}{n^2} \sum_{s, t} A_{s, t} \bar{A}^{[s, t]}.
    \end{align*}
    We can simplify this into an expression we can compute in terms of A as
    \begin{align*}
        &E(c_i c_j | t) = \frac{1}{n^2} \sum_{s, t} A_{s, t} \bar{A}^{[s, t]} \\
                       & = \frac{1}{n^2} \sum_{s, t} \frac{A_{s, t}}{(n - 1)^2} (\sum_{s, t} A_{s,t} - \sum_t A_{s, t} \\
                       & - \sum_s A_{s, t} + A_{s, t}) \\
                       & = \frac{1}{n^2 (n - 1)^2} [\sum_{s, t} A_{s, t} \sum_{s, t} A_{s, t} \\
                       & - \sum_{s,t} A_{s, t} \sum_t A_{s, t} \\ 
                       & - \sum_{s,t} A_{s, t} \sum_s A_{s, t} + \sum_{s, t} A_{s, t}^2] \\
                       & = \frac{1}{n^2 (n - 1)^2} [(\sum_{s, t} A_{s, t}) \sum_{s, t} A_{s, t} \\
                       & - \sum_s (\sum_t A_{s, t}) \sum_t A_{s, t} - \sum_t (\sum_s A_{s, t}) \sum_s A_{s, t} \\
                       & + \sum_{s, t} A_{s, t}^2] \\
                       & = \frac{1}{n^2 (n - 1)^2} [(\sum_{s, t} A_{s, t})^2 - \sum_s (\sum_t A_{s, t})^2 \\
                       & - \sum_t (\sum_s A_{s, t})^2 + \sum_{s, t} A_{s, t}^2].
    \end{align*}
    Combining the values of $E(c_i^2 | t)$ and $E(c_i c_j | t)$, we have 
    \begin{align*}
        E(W_t^2 | t) & = E(\sum_{i = 1}^{t} c_i^2 | t) + E(\sum_{i \neq j} c_i c_j | t) \\
                     & = \sum_{i = 1}^t E(c_i^2 | t) + \sum_{i \neq j} E(c_i c_j | t) \\
                     & = t[\text{Var}(A) + \bar{A}^2] \\
                     & + (\frac{t^2 - t}{n^2 (n - 1)^2}) \\
                     & [(\sum_{s,t} A_{s,t})^2 \\ 
                     & - \sum_s (\sum_t A_{s, t})^2 - \sum_t (\sum_s A_{s, t})^2 \\
                     & + \sum_{s, t} A_{s, t}^2].
    \end{align*}
    Combining this with the equation for variance we have
    \begin{align*}
        \text{Var}(W_t | t) & = E(W_t^2 | t) - E(W_t)^2 \\
                            & = t[\text{Var}(A) + \bar{A}^2] \\ 
                            & + (\frac{t^2 - t}{n^2 (n - 1)^2}) [(\sum_{s,t} A_{s,t})^2 - \sum_s (\sum_t A_{s, t})^2 \\ & 
                            - \sum_t (\sum_s A_{s, t})^2 + \sum_{s, t} A_{s, t}^2] \\ 
                            & - (t\bar{A})^2.
    \end{align*}
\end{proof}

\end{document}